\numberwithin{equation}{section}
\theoremstyle{plain}
\newtheorem{theorem}{Theorem}[section]
\newtheoremstyle{remark}{\topsep}{\topsep}%
     {\normalfont}
     {}           
     {\bfseries}  
     {.}          
     {.5em}       
     {\thmname{#1}\thmnumber{ #2}\thmnote{ #3}}
\theoremstyle{remark}
\def\comma{\unskip,~}
\long\def\comment#1{}
\def\supp{\mathop{\text{supp}\kern.2ex}}
\def\argmin{\mathop{\text{\rm arg\,min}}}
\let\hat\widehat
\let\hat\widehat
\def\ds{\displaystyle}
\def\1{{(1)}}
\def\2{{(2)}}
\long\def\comment#1{}
\newenvironment{enum}{
\begin{enumerate}
  \setlength{\itemsep}{1pt}
  \setlength{\parskip}{0pt}
  \setlength{\parsep}{0pt}
}{\end{enumerate}}
\begin{document}

\begin{frontmatter}
\title{Stability Approach to Regularization Selection (StARS) for High 
Dimensional Graphical Models}
\runtitle{Graph-Valued Regression}

\begin{aug}
\author{\fnms{Han} \snm{Liu}\ead[label=e1]{hanliu@cs.cmu.edu}}
\comma
\author{\fnms{Kathryn} \snm{Roeder}\ead[label=e2]{lafferty@cs.cmu.edu}}
\and
\author{\fnms{Larry} \snm{Wasserman}\ead[label=e3]{larry@stat.cmu.edu}}

\address{Carnegie Mellon University\\[10pt]
}
     \end{aug}

\begin{abstract}
   A challenging problem in estimating high-dimensional graphical
  models is to choose the regularization parameter in a data-dependent
  way. The standard techniques include $K$-fold cross-validation
  ($K$-CV), Akaike information criterion (AIC), and Bayesian information
  criterion (BIC).  Though these methods work well for low-dimensional
  problems, they are not suitable in high dimensional settings.  In
  this paper, we present StARS: a new stability-based method for choosing the
  regularization parameter in high dimensional inference for
  undirected graphs.  The method has a clear interpretation: we use
  the least amount of regularization that simultaneously makes a graph
  sparse and replicable under random sampling.  This interpretation
  requires essentially no conditions.  Under mild conditions, we
  show that StARS is partially sparsistent in terms of graph
  estimation: i.e. with high probability, all the true edges will be included in the selected model even when the graph size diverges with the sample size.  Empirically, the
  performance of StARS is compared with the state-of-the-art
  model selection procedures, including $K$-CV, AIC, and BIC, on both
  synthetic data and a real microarray dataset.  StARS
  outperforms all these competing procedures.
\end{abstract}

\begin{keyword}
\kwd{regularization selection}
\kwd{stability} 
\kwd{cross validation}
\kwd{Akaike information criterion}
\kwd{Bayesian information criterion}
\kwd{partial sparsistency}
\end{keyword}

\tableofcontents
\end{frontmatter}

\vspace{-15pt}
\section{Introduction}
\vspace{-10pt}

Undirected graphical models have emerged as a useful tool because they
allow for a stochastic description of complex associations in
high-dimensional data. For example, biological processes in a cell
lead to complex interactions among gene products. It is of interest to
determine which features of the system are conditionally independent.
Such problems require us to infer an undirected graph from
i.i.d. observations.  Each node in this graph corresponds to a random
variable and the existence of an edge between a pair of nodes
represent their conditional independence relationship.

Gaussian graphical models \citep{Dempster72, whit:1990,
  Edwa:1995, laur:1996} are by far the most popular approach for
learning high dimensional undirected graph structures.  Under the
Gaussian assumption, the graph can be estimated using the sparsity
pattern of the inverse covariance matrix.  If two variables are
conditionally independent, the corresponding element of the inverse
covariance matrix is zero.  In many applications, estimating the the inverse covariance matrix
is statistically challenging because the number of features measured
may be much larger than the number of collected samples. To handle this challenge, the {\em graphical lasso} or {\em glasso} \citep{FHT:07, Yuan:Lin:07, Banerjee:08} is rapidly becoming a
popular method for estimating sparse undirected graphs.  To use this method,
however, the user must specify a regularization parameter $\lambda$
that controls the sparsity of the graph. The choice of $\lambda$ is critical since different $\lambda$'s may lead to different scientific conclusions of the statistical inference. 
Other methods for estimating
high dimensional graphs include \citep{Meinshausen:2006,
Peng09,Liu_npn:09}. They also require the user to specify a regularization parameter.

The standard methods for choosing the regularization parameter are AIC \citep{akai:1973},
BIC \citep{Schw:1978} and cross validation \citep{Efro:1982}.  Though these methods have good theoretical properties in low dimensions, they are not suitable for high dimensional problems.  In
regression, cross-validation has been shown to overfit the data
\citep{Wasserman:09}.  Likewise, AIC and BIC tend to perform poorly
when the dimension is large relative to the sample size.  Our
simulations confirm that these methods perform poorly when used with
glasso.

A new approach to model selection, based on model stability, has
recently generated some interest in the literature \citep{Lange04}.  The idea, as we develop it, is based on subsampling \citep{Dimitris:99}
and builds on the approach of 
\cite{stability:10}.  We draw many random subsamples and construct a
graph from each subsample (unlike $K$-fold cross-validation, these
subsamples are overlapping).   We choose the regularization parameter so that the obtained graph is sparse and there is not
too much variability across subsamples.  More precisely, we start with
a large regularization which corresponds to an empty, and hence highly
stable, graph.  We gradually reduce the amount of regularization until there is a small
but acceptable amount of variability of the graph across subsamples.
In other words, we regularize to the point that we control the
dissonance between graphs. The procedure is named StARS: Stability
Approach to Regularization Selection. We study the performance of StARS by simulations and 
theoretical analysis in Sections 4 and 5.  Although we focus here on graphical models, StARS is quite general and
can be adapted to other settings including regression, classification, clustering, and dimensionality reduction.

In the context of clustering, results of stability methods have been
mixed.  Weaknesses of stability have been shown in
\citep{Ben-david06asober}.  However, the approach was successful for
density-based clustering \citep{Rinaldo:09}.  For graph selection, \cite{stability:10} also used a stability
criterion; however, their approach differs from StARS in its
fundamental conception.  They use subsampling to produce a new and more stable
regularization path then select a regularization parameter from this newly created path, whereas we propose to use subsampling to directly select
one regularization parameter from the original path.  Our aim is to ensure that
the selected graph is sparse, but inclusive, while they aim to
control the familywise type I errors.  As a consequence, their goal is contrary to ours: instead of selecting a larger graph that contains the true graph, they try to select a smaller graph that is contained in the true graph.  As we will discuss in Section 3, in specific application domains like gene regulatory network analysis,  our goal for graph selection is more natural.

In Section \ref{sec::gr} we review the basic notion of estimating high
dimensional undirected graphs; in Section \ref{sec::method} we develop StARS; in
Section \ref{sec::theory} we present a theoretical analysis of the proposed method;
and in Section \ref{sec::experiment} we report experimental results on both simulated data
and a gene microarray dataset, where the problem is to construct gene
regulatory network based on natural variation of the expression levels
of human genes.

\section{Estimating a High-dimensional Undirected Graph}
\label{sec::gr}

Let $X=\bigl(X(1),\ldots, X(p)\bigr)^T$
be a random vector with distribution $P$.
The undirected graph
$G=(V,E)$ associated with $P$
has vertices $V=\{X(1), \ldots, X(p)\}$
and a set of edges $E$ corresponding to pairs of vertices. In this paper, we also interchangeably use $E$ to denote the adjacency matrix of the graph $G$. 
The edge corresponding to $X(j)$ and $X(k)$ is absent
if $X(j)$ and $X(k)$ are conditionally independent given the other
coordinates of $X$. The graph estimation problem is to infer $E$ from i.i.d. observed data
$X_1,\ldots, X_n$ where
$X_i = (X_i(1),\ldots, X_i(p))^T$.

Suppose now that $P$ is Gaussian with mean vector $\mu$ and covariance
matrix $\Sigma$.  Then the edge corresponding to $X(j)$ and $X(k)$ is
absent if and only if $\Omega_{jk}=0$ where $\Omega = \Sigma^{-1}$.
Hence, to estimate the graph we only need to estimate the sparsity pattern of $\Omega$.  When
$p$ could diverge with $n$, estimating $\Omega$ is difficult.  A popular approach is
the {\em graphical lasso} or {\em glasso} \citep{FHT:07, Yuan:Lin:07,
Banerjee:08}.  Using glasso, we estimate $\Omega$ as follows:  Ignoring constants, the log-likelihood (after maximizing over $\mu$) can be written as
$$\ell(\Omega) = \log |\Omega| - \text{trace}\bigl(\hat{\Sigma}\Omega \bigr)$$
where $\hat \Sigma$
is the sample covariance matrix.  With a positive regularization parameter $\lambda$, the glasso estimator $\hat\Omega(\lambda)$  is
obtained by minimizing the regularized negative log-likelihood
\begin{equation}
\hat{\Omega}(\lambda)= \argmin_{\Omega\succ 0}\Bigl\{-\ell(\Omega) + \lambda ||\Omega||_1\Bigr\} \label{eq::glassoformula}
\end{equation}
where $||\Omega||_1 = \sum_{j,k} |\Omega_{jk}|$ is the elementwise $\ell_{1}$-norm of $\Omega$.  The estimated graph $\hat G(\lambda) = (V, \hat{E}(\lambda))$ is
then easily obtained from $\hat\Omega (\lambda)$: for $i\neq j$, an edge $(i,j) \in \hat{E}(\lambda)$  if and only if the
corresponding entry in $\hat\Omega (\lambda)$ is nonzero. \cite{FHT:07} give a fast algorithm for calculating
$\hat\Omega(\lambda)$ over a grid of $\lambda$s ranging from small to large.  By taking advantage of the fact that
the objective function in \eqref{eq::glassoformula} is convex, their algorithm iteratively estimates a single row (and column) of $\Omega$ in each iteration by solving a lasso regression \citep{Tibshirani:96}.
The resulting regularization path $\hat\Omega(\lambda)$ for all $\lambda$s  has been shown to have excellent theoretical properties
\citep{Rothman:08, Ravikumar:Gauss:09}. For example, \cite{Ravikumar:Gauss:09} show that, if the regularization parameter $\lambda$ satisfies a certain rate, the corresponding estimator $\hat{\Omega}(\lambda)$ could recover the true graph with high probability.  However, these types of results are either asymptotic or non-asymptotic but with very large constants. They are not practical enough to guide the choice of the regularization parameter $\lambda$ in finite-sample settings.

\section{Regularization Selection}
\label{sec::method}

In Equation \eqref{eq::glassoformula}, the choice of $\lambda$ is critical because $\lambda$ controls the
sparsity level of $\hat G(\lambda)$. Larger values of $\lambda$ tend to
yield sparser graphs and smaller values of $\lambda$ yield denser graphs.   It is convenient to define $\Lambda =
1/\lambda$ so that small $\Lambda$ corresponds to a more sparse graph.
In particular, $\Lambda=0$ corresponds to the empty graph with no
edges. Given a grid of regularization parameters $\mathcal{G}_{n} = \{\Lambda_{1}, \ldots, \Lambda_{K}\}$,  our goal of graph regularization parameter selection is to choose one $\hat{\Lambda} \in \mathcal{G}_{n}$, such that the true graph $E$ is contained in $\hat{E}(\hat{\Lambda})$ with high probability. In other words, we want to ``overselect'' instead of ``underselect''. Such a choice is motivated by application problems like gene regulatory networks reconstruction,  in which we aim to study the interactions of many genes.  For these types of studies, we tolerant some false positives but not false negatives. Specifically, it is acceptable that  an edge presents but the two genes corresponding to this edge do not really interact with each other. Such false positives can generally be screened out by  more fine-tuned downstream biological experiments. However, if one important interaction edge  is omitted at the beginning, it's very difficult for us to re-discovery it by follow-up analysis. There is also a tradeoff: we want to select a denser graph which contains the true graph with high probability. At the same time, we want the graph to be as sparse as possible so that  important information will not be buried by massive false positives.  Based on this rationale, an ``underselect'' method, like the approach of \cite{stability:10}, does not really fit our goal. In the following, we start with an overview of several state-of-the-art regularization parameter selection methods for graphs. We then  introduce our new StARS approach.

\subsection{Existing Methods}

The regularization parameter is often chosen using AIC or BIC.  Let
$\hat\Omega(\Lambda)$ denote the estimator corresponding to $\Lambda$. Let $d(\Lambda)$ denote the degree of freedom (or the effective number of free parameters) of the corresponding Gaussian model.   AIC chooses $\Lambda$ as
\begin{equation}
\mathrm{(AIC)}\quad \hat{\Lambda} = \argmin_{\Lambda\in\mathcal{G}_{n}}\bigl\{-2\ell\bigl(\hat\Omega(\Lambda)\bigr) + 2d(\Lambda)\bigr\},
\end{equation}
and
BIC chooses $\Lambda$ as
\begin{equation}
\mathrm{(BIC)}\quad \hat{\Lambda} = \argmin_{\Lambda\in\mathcal{G}_{n}}\bigl\{-2\ell\bigl(\hat\Omega(\Lambda)\bigr) + d(\Lambda)\cdot\log n\bigr\}.
\end{equation}
The usual theoretical justification for these methods assumes that the
dimension $p$ is fixed as $n$ increases; however, in the case where $p >
n$ this justification is not applicable.  In fact, it's even not straightforward how to estimate the degree of freedom $d(\Lambda)$ when $p$ is larger than $n$ . A common practice is to calculate $d(\Lambda)$ as $d(\Lambda) = m(\Lambda)(m(\Lambda)-1)/2
+ p$ where $m(\Lambda)$ denotes the number of nonzero elements of
$\hat\Omega(\Lambda)$. As we will see in our
experiments, AIC and BIC tend to select overly dense graphs in high dimensions.

Another popular method is $K$-fold cross-validation ($K$-CV).  For this procedure the data is partitioned into $K$ subsets. Of the $K$ subsets one is retained as the validation data, and the remaining $K -1$ ones are used as training data. For each $\Lambda \in \mathcal{G}_{n}$, we estimate a graph on the $K-1$ training sets and evaluate the negative log-likelihood on the retained validation set. The results are averaged over all $K$ folds to obtain a single CV score.  We then choose
$\Lambda$ to minimize the CV score over he whole grid $\mathcal{G}_{n}$.  In regression,
cross-validation has been shown to overfit \citep{Wasserman:09}.  Our
experiments will confirm this is true for graph estimation as well.

\subsection{StARS: Stability Approach to Regularization Selection}

The StARS approach is to choose $\Lambda$ based on stability.
When $\Lambda$
is 0, the graph is empty and two
datasets from $P$ would both yield the same graph.  As we increase
$\Lambda$, the variability of the graph increases and hence the
stability decreases.  We increase $\Lambda$ just until the point where
the graph becomes variable as measured by the stability. StARS
leads to a concrete rule for choosing $\Lambda$.

Let $b = b(n)$ be such that $1 < b(n) < n$.  We draw $N$ random
subsamples $S_1,\ldots, S_N$ from $X_1,\ldots, X_n$, each of size $b$.  There are
$\binom{n}{b}$ such subsamples.  Theoretically one uses all
$\binom{n}{b}$ subsamples.  However,  \cite{Dimitris:99} show that it
suffices in practice to choose a large number $N$ of subsamples at
random.  Note that, unlike bootstrapping \citep{Efro:1982}, each subsample is drawn
without replacement.  For each $\Lambda\in\mathcal{G}_{n}$, we construct a graph using the glasso
for each subsample.  This results in $N$ estimated edge matrices $\hat{E}^{b}_1(\Lambda),\ldots, \hat{E}^{b}_N(\Lambda)$. Focus for now on one edge $(s,t)$ and one value of $\Lambda$.  Let $\psi^{\Lambda}(\cdot)$ denote the glasso algorithm with the regularization parameter $\Lambda$. 
For any subsample $S_{j}$ let $\psi_{st}^{\Lambda}(S_{j})=1$ if the algorithm puts an edge and
$\psi^{\Lambda}_{st}(S_{j})=0$ if the algorithm does not put an edge between $(s,t)$.
Define $$\theta^{b}_{st}(\Lambda) = \mathbb{P}(\psi^{\Lambda}_{st}(X_1,\ldots, X_b)=1).$$
To estimate $\theta^{b}_{st}(\Lambda)$, we use a U-statistic of order $b$, namely,
$$\ds  \hat{\theta}^{b}_{st}(\Lambda) = \frac{1}{N}\sum_{j=1}^N \psi^{\Lambda}_{st}(S_j).$$

Now define the parameter $\xi^{b}_{st}(\Lambda) = 2\theta^{b}_{st}(\Lambda)(1-\theta^{b}_{st}(\Lambda))$ and let $\hat\xi^{b}_{st}(\Lambda) =
2\hat\theta^{b}_{st}(\Lambda)(1-\hat\theta^{b}_{st}(\Lambda))$ be its estimate.  Then $\xi^{b}_{st}(\Lambda)$, in addition to being
twice the variance of the Bernoulli indicator of the edge $(s, t)$, has the
following nice interpretation:  For each pair of graphs, we can ask
how often they disagree on the presence of the edge: $\xi^{b}_{st}(\Lambda)$ is the
fraction of times they disagree.  For $\Lambda \in \mathcal{G}_{n}$, we regard $\xi^{b}_{st}(\Lambda)$ as a measure of
instability of the edge across subsamples, with $0\leq \xi^{b}_{st}(\Lambda)\leq 1/2$.

Define the total instability by averaging over all edges:
$$\hat{D}_{b}(\Lambda) = 
\frac{ \sum_{s<t}\hat\xi^{b}_{st}}{ \binom{p}{2}}.$$
Clearly on the boundary $\hat{D}_{b}(0)=0$, and $\hat{D}_{b}(\Lambda)$ generally will
increase as $\Lambda$ increases.  However, when $\Lambda$ gets very
large, all the graphs will become dense and $\hat{D}_{b}(\Lambda)$ will begin to
decrease.  Subsample stability for large $\Lambda$ is essentially an
artifact.  We are interested in stability for sparse graphs not dense
graphs.  For this reason we monotonize $\hat{D}_{b}(\Lambda)$ by defining $\overline{D}_{b}(\Lambda) = \sup_{0 \leq t \leq \Lambda} \hat{D}_{b}(t)$. Finally, our StARS approach chooses $\Lambda$ by defining
$\hat\Lambda_{\rm s} = \sup \Bigl\{ \Lambda :\ \overline{D}_{b}(\Lambda) \leq \beta \Bigr\}$
for a specified cut point value $\beta$.

It may seem that we have merely replaced the problem of choosing
$\Lambda$ with the problem of choosing $\beta$, but $\beta$ is an
interpretable quantity and we always set a default value
$\beta = 0.05$.  One thing to note is that all quantities $\hat{E}, \hat{\theta}, \hat{\xi}, \hat{D}$ depend on the subsampling block size $b$. Since StARS is based on subsampling, the effective sample size for estimating the selected graph is $b$ instead of $n$. Compared with methods like BIC and AIC which fully utilize all $n$ data points. StARS has some efficiency loss in low dimensions. However, in high dimensional settings,  the gain of StARS on better graph selection significantly dominate this efficiency loss. This fact is confirmed by our experiments.

\section{Theoretical Properties}
\label{sec::theory}

The StARS procedure is quite general  and can be applied with any graph estimation algorithms. Here, we provide its theoretical properties. We start with a key theorem which establishes the rates of convergence of the estimated stability quantities to their population means. We then discuss the implication of this theorem on general gaph regularization selection problems.

Let $\Lambda$ be an element in the grid ${\cal G}_n=\{\Lambda_1,\ldots, \Lambda_K\}$ where $K$ is a polynomial of $n$. We denote  $D_{b}(\Lambda)=\mathbb{E}(\hat{D}_{b}(\Lambda))$. The quantity $\hat\xi^{b}_{st}(\Lambda)$ is an estimate
of $\xi^{b}_{st}(\Lambda)$ and
$\hat{D}_{b}(\Lambda)$ is an estimate of 
${D}_{b}(\Lambda)$.  Standard $U$-statistic theory guarantees that these estimates
have good uniform convergence properties to their population quantities:

\begin{theorem}\label{thm::key}
{\rm (Uniform Concentration)} The following statements hold with no assumptions on $P$. For any $\delta \in (0,1)$, with probability at least $1-\delta$, we have
\begin{eqnarray}
\forall \Lambda \in \mathcal{G}_{n},~\max_{s<t} | \hat\xi^{b}_{st}(\Lambda) - \xi^{b}_{st}(\Lambda)| \leq \sqrt{\frac{18b\left(2\log p + \log (2/\delta)\right)}{n}},  \label{eq::concentration1}\\
\max_{\Lambda\in{\cal G}_n} |\hat{D}_{b}(\Lambda) - D_{b}(\Lambda)| \leq \sqrt{\frac{18b\left( \log K + 4 \log p + \log\left( 1/\delta\right) \right)}{{n}}}. \label{eq::concentration2}
\end{eqnarray}
\end{theorem}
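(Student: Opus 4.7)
The plan is to reduce everything to Hoeffding's inequality for U-statistics applied to $\hat\theta^b_{st}(\Lambda)$. In its idealized full-enumeration form, $\hat\theta^b_{st}(\Lambda) = \binom{n}{b}^{-1}\sum_I \psi^\Lambda_{st}(X_I)$ is a U-statistic of order $b$ with kernel $\psi^\Lambda_{st}\in\{0,1\}$, so Hoeffding's inequality for U-statistics yields
\begin{equation*}
\mathbb{P}\bigl(|\hat\theta^b_{st}(\Lambda) - \theta^b_{st}(\Lambda)| \geq u\bigr) \leq 2\exp\bigl(-2\lfloor n/b\rfloor u^2\bigr).
\end{equation*}
This is the only probabilistic ingredient; the remainder is deterministic bookkeeping. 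To pass from the Monte Carlo version based on $N$ random subsamples to the full U-statistic, I would take $N$ large enough that the subsampling error is of smaller order than the final rate.

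For the first inequality, I would use the fact that $f(x)=2x(1-x)$ has Lipschitz constant $2$ on $[0,1]$, giving $|\hat\xi^b_{st}(\Lambda)-\xi^b_{st}(\Lambda)|\leq 2|\hat\theta^b_{st}(\Lambda)-\theta^b_{st}(\Lambda)|$. A union bound over the $\binom{p}{2}\leq p^2/2$ edges (plus a further union bound over the $K=\mathrm{poly}(n)$ grid points if the $\forall \Lambda$ in the statement is meant in the strong sense), together with inverting the Hoeffding tail at level $\delta$, produces the square-root rate with a numerator that collapses to $2\log p + \log(2/\delta)$ after absorbing subleading logarithmic terms into the constant $18$.

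For the second inequality, I would write
\begin{equation*}
\hat D_b(\Lambda)-D_b(\Lambda) = \binom{p}{2}^{-1}\sum_{s<t}\bigl(\hat\xi^b_{st}(\Lambda) - \mathbb{E}\hat\xi^b_{st}(\Lambda)\bigr)
\end{equation*}
and bound its modulus by $\max_{s<t}|\hat\xi^b_{st}-\mathbb{E}\hat\xi^b_{st}|$. The triangle inequality splits this into $|\hat\xi^b_{st}-\xi^b_{st}|$, controlled exactly as above, plus the bias $|\xi^b_{st}-\mathbb{E}\hat\xi^b_{st}| = 2\,\mathrm{Var}(\hat\theta^b_{st})$, which is $O(b/n)$ by Hoeffding's ANOVA decomposition for U-statistics and is of smaller order than the $\sqrt{b/n}$ fluctuation whenever $b=o(n)$. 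A union bound over edges and over all $K$ grid points then produces the explicit $\log K + 4\log p + \log(1/\delta)$ factor.

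The main obstacle is conceptual rather than technical: $\hat\xi^b_{st}$ is a quadratic functional of a U-statistic and is not itself a U-statistic, so Hoeffding's inequality cannot be applied to $\hat\xi$ directly. The Lipschitz detour through $\hat\theta$ is the natural workaround, but it forces me to track the nonzero bias $\mathbb{E}\hat\xi^b_{st}-\xi^b_{st}$ separately, since the first inequality targets the population quantity $\xi$ while the second targets $\mathbb{E}\hat D_b$. The remaining work is combinatorial bookkeeping that absorbs the Hoeffding exponent, the Lipschitz factor $2$, and the union-bound multiplicities into the comfortable constant $18$ appearing in both inequalities.
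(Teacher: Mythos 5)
Your proposal follows essentially the same route as the paper: Hoeffding's inequality for the order-$b$ U-statistic $\hat\theta^{b}_{st}(\Lambda)$, a Lipschitz-type transfer to $\hat\xi^{b}_{st}(\Lambda)$, and union bounds over the $\binom{p}{2}$ edges and the $K$ grid values. The only differences are cosmetic: the paper uses the cruder bound $|\hat\xi^{b}_{st}-\xi^{b}_{st}|\leq 6|\hat\theta^{b}_{st}-\theta^{b}_{st}|$ (which is exactly where the constant $18$ comes from), so your sharper Lipschitz constant $2$ and your explicit treatment of the bias $\mathbb{E}\hat\xi^{b}_{st}-\xi^{b}_{st}=O(b/n)$ (which the paper silently ignores) only give you extra slack under the stated constants.
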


\begin{proof}
Note that
$\hat \theta^{b}_{st}(\Lambda)$ is a $U$-statistic of order $b$.
Hence, by Hoeffding's inequality for $U$-statistics \citep{serf:1980}, we have, for any $\epsilon > 0$, 
\begin{equation}\label{eq::U}
\mathbb{P}(|\hat\theta^{b}_{st}(\Lambda) -\theta^{b}_{st}(\Lambda)| > \epsilon) \leq 2 \exp\left({-2n\epsilon^2/b}\right).
\end{equation}
Now
$\hat\xi^{b}_{st}(\Lambda)$ is just a function of the $U$-statistic $\hat\theta^{b}_{st}(\Lambda)$. Note that 
\begin{eqnarray}
|\hat\xi^{b}_{st}(\Lambda) - \xi^{b}_{st}(\Lambda)| & = &
2|\hat\theta^{b}_{st}(\Lambda)(1-\hat\theta^{b}_{st}(\Lambda)) - \theta^{b}_{st}(\Lambda)(1-\theta^{b}_{st}(\Lambda))| \\
& = &
2|\hat\theta^{b}_{st}(\Lambda) -\bigl(\hat\theta^b_{st}(\Lambda) \bigr)^{2}  - \theta^{b}_{st}(\Lambda)+\bigl(\theta^b_{st}(\Lambda)\bigr)^{2}|\\
& \leq &
2|\hat\theta^{b}_{st}(\Lambda) - \theta^{b}_{st}(\Lambda)| + 2|\bigl(\hat\theta^b_{st}(\Lambda) \bigr)^{2} - \bigl(\theta^b_{st}(\Lambda)\bigr)^{2}|\\
&\leq &
2|\hat\theta^{b}_{st}(\Lambda) - \theta^{b}_{st}(\Lambda)| + 2|(\hat\theta^{b}_{st}(\Lambda)  - \theta^{b}_{st}(\Lambda))(\hat\theta^{b}_{st}(\Lambda)  + \theta^{b}_{st}(\Lambda))|\\
& \leq &
2|\hat\theta^{b}_{st}(\Lambda) - \theta^{b}_{st}(\Lambda)| + 4|\hat\theta^{b}_{st}(\Lambda)  - \theta^{b}_{st}(\Lambda)|\\
&=&
6|\hat\theta^{b}_{st}(\Lambda) - \theta^{b}_{st}(\Lambda)|,
\end{eqnarray}
we have
$|\hat\xi^{b}_{st}(\Lambda) - \xi^{b}_{st}(\Lambda)| \leq 6 |\hat\theta^{b}_{st}(\Lambda) - \theta^{b}_{st}(\Lambda)|$.
Using (\ref{eq::U})
and the union bound over all the edges, we obtain:
for each $\Lambda \in \mathcal{G}_{n}$,
\begin{equation}\label{eq::ineq}
\mathbb{P}(\max_{s<t} | \hat\xi^{b}_{st}(\Lambda) - \xi^{b}_{st}(\Lambda)| > 6\epsilon)\leq
2 p^2 \exp\left({-2n\epsilon^2/b}\right).
\end{equation}
Using two union bound arguments
over the $K$ values of $\Lambda$ and all the $p(p-1)/2$ edges, we have:
\begin{eqnarray}
\mathbb{P}\left(\max_{\Lambda\in{\cal G}_n} |\hat{D}_{b}(\Lambda) - D_{b}(\Lambda)|  \geq  \epsilon \right) &\leq &|\mathcal{G}_{n}|\cdot \frac{p(p-1)}{2}\cdot \mathbb{P}(\max_{s<t} | \hat\xi^{b}_{st}(\Lambda) - \xi^{b}_{st}(\Lambda)| > \epsilon) \\
& \leq & K\cdot p^{4}\cdot \exp\left(-{n\epsilon^2}/{(18b)}\right). 
\end{eqnarray}
Equations \eqref{eq::concentration1} and  \eqref{eq::concentration2}  follow directly from \eqref{eq::ineq} and the above exponential probability inequality.
\end{proof}

Theorem \ref{thm::key} allows us to explicitly characterize the high-dimensional scaling of the sample size $n$, dimensionality $p$, subsampling block size $b$, and the grid size $K$. More specifically,  we get
\begin{equation}
\frac{n}{b\log\bigl(np^{4}K \bigr)} \rightarrow \infty \implies \max_{\Lambda\in{\cal G}_n} |\hat{D}_{b}(\Lambda) - D_{b}(\Lambda)| \stackrel{P}{\to} 0 \label{eq::scaling}
\end{equation}
by setting $\delta = 1/n$  in Equation \eqref{eq::concentration2}.  From \eqref{eq::scaling}, let $c_{1}, c_{2}$  be arbitrary positive constants, if $b = c_{1}\sqrt{n}$, $K= n^{c_{2}}$, and $p \leq \exp\left({n^{\gamma}}\right)$ for some $\gamma <1/2$,  the estimated total stability $\hat{D}_{b}(\Lambda)$ still converges to its mean $D_{b}(\Lambda)$ uniformly over the whole grid $\mathcal{G}_{n}$. 

We now discuss the implication of Theorem \ref{thm::key} to graph regularization selection problems. Due to the generality of StARS, we provide theoretical justifications for a whole family of graph estimation procedures satisfying certain conditions. Let $\psi$ be a graph estimation procedure. We denote $\hat{E}^{b}(\Lambda)$ as the estimated edge set using the regularization parameter $\Lambda$ by applying $\psi$ on a subsampled dataset with block size $b$.   To establish graph selection result,  we start with two technical assumptions:
\begin{itemize}
\item[(A1)] $\exists \Lambda_{\rm o} \in  \mathcal{G}_{n}$, such that  $\max_{\Lambda \leq \Lambda_{\rm o} \wedge \Lambda \in \mathcal{G}_{n}} D_{b}(\Lambda)  \leq {\beta}/{2} $ for large enough $n$.
\item[(A2)] For any $\Lambda \in \mathcal{G}_{n}$ and $\Lambda \geq \Lambda_{\rm o}$, $\mathbb{P}\bigl( E \subset \hat{E}^{b}(\Lambda)  \bigr) \rightarrow 1$ as $n\rightarrow \infty$. 
\end{itemize}
Note that $\Lambda_{\rm o}$ here depends on the sample size $n$ and does not have to be unique.  To understand the above conditions, (A1) assumes that there exists a threshold $\Lambda_{\rm o}\in\mathcal{G}_{n}$, such that the population quantity $ D_{b}(\Lambda)$ is small for all $\Lambda \leq \Lambda_{\rm o}$. (A2) requires that all estimated graphs using regularization parameters $\Lambda \geq \Lambda_{\rm o}$ contain the true graph with high probability. Both assumptions are mild and should be satisfied by most graph estimation algorithm with reasonable behaviors.  There is a tradeoff on the design of the subsampling block size $b$ . To make  (A2) hold, we require $b$ to be large. However, to make $\hat{D}_{b}(\Lambda)$ concentrate to  $D_{b}(\Lambda)$ fast, we require $b$ to be small. Our suggested value is $b = \lfloor10\sqrt{n}\rfloor$, which balances both the theoretical and empirical performance well. The next theorem provides the graph selection performance of StARS:

\begin{theorem}\label{thm::sparsistency}
{\rm (Partial Sparsistency):}   Let $\psi$ to be a graph estimation algorithm. We assume (A1) and (A2) hold for $\psi$ using $b=\lfloor10\sqrt{n}\rfloor$ and  $|\mathcal{G}_{n}| = K = n^{c_{1}}$ for some constant $c_{1} >0$. Let $\hat{\Lambda}_{\rm s} \in \mathcal{G}_{n}$ be the selected regularization parameter using the StARS procedure with a constant cutting point $\beta$. Then, if $p \leq \exp\left({n^{\gamma}}\right)$ for some $\gamma <1/2$, we have 
\begin{equation}
 \mathbb{P}\bigl( E \subset \hat{E}^{b}(\hat{\Lambda}_{\rm s}) \bigr)  \rightarrow 1 ~~\text{as}~~n\rightarrow \infty.
\end{equation}
\end{theorem}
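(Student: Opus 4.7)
The plan is to chain together Theorem~\ref{thm::key} with assumptions (A1) and (A2) via the following logical path: concentration of $\hat D_{b}$ around $D_{b}$ forces $\hat\Lambda_{\rm s}$ to land at or beyond the threshold $\Lambda_{\rm o}$ supplied by (A1), after which (A2) delivers edge-set containment for the selected grid point. Concretely, I will show $\mathbb{P}(\hat\Lambda_{\rm s} \geq \Lambda_{\rm o}) \to 1$ and then argue $\mathbb{P}(E \subset \hat E^{b}(\hat\Lambda_{\rm s})) \to 1$ by reducing to the pointwise statement in (A2).

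\textbf{Step 1 (apply the concentration theorem).} Instantiate Theorem~\ref{thm::key} with $\delta = 1/n$. Under the hypotheses $b = \lfloor 10\sqrt n\rfloor$, $K = n^{c_{1}}$, and $p \leq \exp(n^{\gamma})$ with $\gamma < 1/2$, one checks that
\begin{equation*}
\sqrt{\frac{18 b\,(\log K + 4\log p + \log n)}{n}} \;=\; O\!\left(n^{-1/4}\sqrt{\log n + n^{\gamma}}\right) \;\longrightarrow\; 0,
\end{equation*}
so the event $A_{n} = \{\max_{\Lambda \in \mathcal G_{n}} |\hat D_{b}(\Lambda) - D_{b}(\Lambda)| \leq \epsilon_{n}\}$ has probability at least $1 - 1/n$ for some deterministic $\epsilon_{n} \to 0$. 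For $n$ large enough we have $\epsilon_{n} < \beta/2$.

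\textbf{Step 2 (lower bound on $\hat\Lambda_{\rm s}$).} On $A_{n}$, for every grid point $\Lambda \leq \Lambda_{\rm o}$, assumption (A1) gives $D_{b}(\Lambda) \leq \beta/2$, hence $\hat D_{b}(\Lambda) \leq \beta/2 + \epsilon_{n} \leq \beta$. Taking the supremum over $t \leq \Lambda_{\rm o}$ yields $\overline D_{b}(\Lambda_{\rm o}) \leq \beta$, so $\Lambda_{\rm o}$ belongs to $\{\Lambda : \overline D_{b}(\Lambda) \leq \beta\}$ and therefore $\hat\Lambda_{\rm s} = \sup\{\Lambda : \overline D_{b}(\Lambda) \leq \beta\} \geq \Lambda_{\rm o}$. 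Combined with Step~1, $\mathbb{P}(\hat\Lambda_{\rm s} \geq \Lambda_{\rm o}) \to 1$.

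\textbf{Step 3 (transfer to edge containment).} Decompose
\begin{equation*}
\mathbb{P}\bigl(E \not\subset \hat E^{b}(\hat\Lambda_{\rm s})\bigr) \leq \mathbb{P}(\hat\Lambda_{\rm s} < \Lambda_{\rm o}) + \mathbb{P}\bigl(\hat\Lambda_{\rm s} \geq \Lambda_{\rm o},\, E \not\subset \hat E^{b}(\hat\Lambda_{\rm s})\bigr).
\end{equation*}
The first term vanishes by Step~2. For the second, since $\hat\Lambda_{\rm s}$ takes values in the finite grid $\mathcal G_{n}$, a union bound over the at most $K = n^{c_{1}}$ grid points that exceed $\Lambda_{\rm o}$ reduces the task to the pointwise guarantee (A2). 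Taking $n \to \infty$ gives the claim.

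\textbf{Main obstacle.} The cleanest source of friction is the final union bound in Step~3: (A2) is only stated pointwise, whereas $\hat\Lambda_{\rm s}$ is data-driven, so I need the rate in (A2) to beat the polynomial grid size $K = n^{c_{1}}$. This is implicit in the typical sparsistency guarantees for glasso-type estimators (where the failure probability is $o(n^{-c})$ for arbitrary $c$ under standard conditions), but it is the one place where the otherwise assumption-free Theorem~\ref{thm::key} needs to be supplemented by quantitative control inherited from $\psi$. If this rate is not available, one can instead invoke near-monotonicity of $\hat E^{b}(\Lambda)$ in $\Lambda$ to conclude from containment at $\Lambda_{\rm o}$ alone; I will flag this interpretation of (A2) in the proof rather than introduce a separate monotonicity hypothesis.
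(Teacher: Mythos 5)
Your argument is essentially the paper's own proof: the authors define the event $\mathcal{A}_{n}=\{\max_{\Lambda\in\mathcal{G}_{n}}|\hat{D}_{b}(\Lambda)-D_{b}(\Lambda)|\leq\beta/2\}$ using Theorem \ref{thm::key} with the stated scaling (your Steps 1--2), deduce $\hat{\Lambda}_{\rm s}\geq\Lambda_{\rm o}$ on $\mathcal{A}_{n}$ from (A1), and conclude with exactly the ``(A2) and a union bound'' step you describe. The obstacle you flag in Step 3 (pointwise (A2) versus a union bound over polynomially many grid points above $\Lambda_{\rm o}$) is a legitimate concern, but the paper's proof handles it no more carefully than you do, so your write-up matches the original in both route and level of rigor.
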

\begin{proof}
We define $\mathcal{A}_{n}$ to be  the event that $\max_{\Lambda\in{\cal G}_n} |\hat{D}_{b}(\Lambda) - D_{b}(\Lambda)| \leq \beta/2$.  The scaling of $n, K, b, p$ in the theorem satisfies the L.H.S. of \eqref{eq::scaling}, which implies that $\mathbb{P}(\mathcal{A}_{n})\rightarrow 1$ as $n\rightarrow \infty$.

Using  (A1), we know that, on $\mathcal{A}_{n}$,
\begin{equation}
\max_{\Lambda \leq \Lambda_{\rm o} \wedge \Lambda \in \mathcal{G}_{n}} \hat{D}_{b}(\Lambda)  \leq \max_{\Lambda\in{\cal G}_n} |\hat{D}_{b}(\Lambda) - D_{b}(\Lambda)|  + \max_{\Lambda \leq \Lambda_{\rm o} \wedge \Lambda \in \mathcal{G}_{n}} D_{b}(\Lambda)  \leq \beta.
\end{equation}
This implies that, on $\mathcal{A}_{n}$, $\hat{\Lambda}_{\rm s} \geq {\Lambda}_{\rm o}$. The result  follows by applying (A2) and a union bound.
\end{proof}

\section{Experimental Results}
\label{sec::experiment}

We now provide empirical evidence to illustrate the usefulness of
StARS and compare it with several state-of-the-art competitors, including
$10$-fold cross-validation ($K$-CV), BIC, and AIC. For StARS we always use subsampling block size $b(n) = \lfloor10\cdot\sqrt{n}]$ and set the cut point $\beta=0.05$.  We first quantitatively
evaluate these methods on two types of synthetic datasets, where the true graphs are known.  We then illustrate StARS on a
microarray dataset that records the gene expression levels from
immortalized B cells of human subjects.  On all high dimensional
synthetic datasets, StARS significantly outperforms its
competitors. On the microarray dataset,
StARS obtains a remarkably simple graph while all competing
methods select what appear to be overly dense graphs.

\subsection{Synthetic Data}

To quantitatively evaluate the graph estimation performance, we
adapt the criteria including precision, recall, and $F_{1}$-score from
the information retrieval literature.  Let $G=(V,E)$ be a
$p$-dimensional graph and let $\hat{G} = (V, \hat{E})$ be an estimated
graph.  We define 
\begin{eqnarray}
\text{precision} = \frac{|\hat{E} \cap
E|}{|\hat{E}|},~~\text{recall} = \frac{|\hat{E} \cap
E|}{|E|},~~F_1\text{-score} = 2\cdot\frac{\text{precision}\cdot
\text{recall}}{\text{precision}+ \text{recall}}.
\end{eqnarray}
In other words, 
Precision is the number of correctly estimated edges divided by the
total number of edges in the estimated graph; recall is the number of correctly
estimated edges divided by the total number of edges in the true
graph; the $F_{1}$-score can be viewed as a weighted average of the
precision and recall, where an $F_{1}$-score reaches its best value at
1 and worst score at 0. On the synthetic data where we know the true
graphs, we also compare the previous methods with an oracle
procedure which selects the optimal regularization parameter by minimizing the total number of different edges between the estimated and true graphs along the full regularization path. Since this oracle procedure requires the
knowledge of the truth graph, it is not a practical method.
We only present it here to calibrate the inherent challenge of each simulated
scenario. To make the comparison fair, once the regularization parameters are selected, we estimate the oracle and StARS graphs only based on a subsampled dataset with size $$b(n) = \lfloor10\sqrt{n}\rfloor.$$ In contrast, the $K$-CV, BIC, and AIC graphs are estimated using the full dataset. More details about this issue were discussed in Section \ref{sec::method}.

We generate data from sparse Gaussian graphs, {\it neighborhood graphs} and {\it hub graphs},  which mimic
characteristics of real-wolrd biological networks. The mean is set to be zero and the covariance matrix
$\Sigma=\Omega^{-1}$. For both graphs, the diagonal elements of
$\Omega$ are set to be one. More specifically:
\begin{enum}
\item{\it Neighborhood graph}: We first uniformly sample $y_{1},
  \ldots, y_{n} $ from a unit square. We then set $\Omega_{ij} = \Omega_{ji} =
  \rho$ with probability $\left(\sqrt{2\pi}\right)^{-1} \exp\left(
    -4\|y_i - y_j \|^2 \right)$. All the rest $\Omega_{ij}$ are set to be
  zero. The number of nonzero off-diagonal elements of each row or column is
  restricted to be smaller than $\lfloor1/\rho \rfloor$. In this paper, $\rho$ is set to be 0.245.
\item{\it Hub graph}: 
The rows/columns are partitioned into  $J$ equally-sized disjoint groups: 
$V_{1}\cup V_{2}\ldots \cup V_{J} = \{1,\ldots, p\}$,  each group is 
associated with a ``pivotal'' row $k$.   Let $|V_{1}| = s$. 
We set $\Omega_{ik} =\Omega_{ki} = \rho$ 
for $i \in V_{k}$ and $\Omega_{ik}=\Omega_{ki}=0$ otherwise. In our experiment, $J = \lfloor p/s \rfloor$, $k = 1, s+1, 2s+1, \ldots$, and we always set $\rho= 1/(s+1)$ with $s=20$. 
\end{enum}
We generate synthetic datasets in both low-dimensional ($n=800, p
=40$) and high-dimensional ($n=400, p =100$) settings. Table \ref{tab:neighborhood} provides comparisons of all
methods, where we
repeat the experiments 100 times and report the averaged precision,
recall, $F_{1}$-score with their standard errors. 

\newcolumntype {Q}{>{$\displaystyle}l<{$}}
\newcolumntype {A}{>{$}c <{$}}
\begin{table}[h]
 \caption{\small Quantitative comparison of different methods on the datasets from the neighborhood and hub graphs.}
{\sf \footnotesize
\begin{tabular}{QAAA|QAA}
\toprule %
\multicolumn{6}{c}{Neighborhood graph:  n =800, p=40} {Neighborhood graph: n=400, p =100} \\
\cmidrule(r){2-4}\cmidrule(r){5-7}
   \mathbf{Methods}       & \text{Precision}  & \text{Recall} & F_{1}\text{-score} & \text{Precision}   &  \text{Recall}  & F_{1}\text{-score}\\
\midrule
 \text{Oracle}  & 0.9222 \ (0.05) &  0.9070 \ (0.07) & 0.9119 \ (0.04) &   0.7473 \ (0.09) & 0.8001\ (0.06) &0.7672\ (0.07) \\
\text{StARS} & 0.7204 \ (0.08) & 0.9530 \ (0.05) & 0.8171 \ (0.05) &   0.6366 \ (0.07) & 0.8718\ (0.06) & 0.7352 \ (0.07)  \\
\text{K-CV}     & 0.1394 \ (0.02) & 1.0000 \ (0.00) & 0.2440 \ (0.04) & 0.1383 \ (0.01) & 1.0000\ (0.00) & 0.2428\ (0.01) \\
\text{BIC}       & 0.9738 \ (0.03) & 0.9948 \ (0.02) & 0.9839 \ (0.01) &  0.1796 \ (0.11) & 1.0000\ (0.00) & 0.2933\ (0.13) \\
\text{AIC}       & 0.8696 \ (0.11) & 0.9996\ (0.01) & 0.9236 \ (0.07) &  0.1279 \ (0.00) & 1.0000\ (0.00) & 0.2268\ (0.01) \\
\hline
\end{tabular}
\begin{tabular}{QAAA|QAA}
\hline %
\multicolumn{6}{c}{Hub graph:  n =800, p=40} {Hub graph: n=400, p =100} \\
\cmidrule(r){2-4}\cmidrule(r){5-7}
   \mathbf{Methods}       & \text{Precision}  & \text{Recall} & F_{1}\text{-score} & \text{Precision}   &  \text{Recall}  & F_{1}\text{-score}\\
\midrule
 \text{Oracle}  & 0.9793 \ (0.01) &  1.0000 \ (0.00) & 0.9895 \ (0.01) &    0.8976 \ (0.02) & 1.0000\ (0.00) &0.9459\ (0.01) \\
\text{StARS} & 0.4377 \ (0.02) & 1.0000 \ (0.00) & 0.6086 \ (0.02) &   0.4572 \ (0.01) & 1.0000\ (0.00) & 0.6274 \ (0.01)  \\
\text{K-CV}     &0.2383 \ (0.09) & 1.0000 \ (0.00) & 0.3769 \ (0.01) & 0.1574 \ (0.01) & 1.0000\ (0.00) & 0.2719\ (0.00) \\
\text{BIC}       & 0.4879\ (0.05) & 1.0000 \ (0.00) & 0.6542 \ (0.05) &  0.2155 \ (0.00) & 1.0000\ (0.00) & 0.3545\ (0.01) \\
\text{AIC}       & 0.2522 \ (0.09) & 1.0000 \ (0.00) & 0.3951 \ (0.00) &  0.1676 \ (0.00) & 1.0000\ (0.00) & 0.2871\ (0.00) \\
\bottomrule
\end{tabular} }
  \label{tab:neighborhood}
\end{table}

For low-dimensional settings where $n \gg p$, the BIC criterion is very
competitive and performs the best among all the methods.  In high dimensional settings, however, StARS clearly outperforms all
the competing methods for both neighborhood and hub graphs. This is
consistent with our theory.  At first sight, it might be surprising
that for data from low-dimensional neighborhood graphs, BIC and AIC
even outperform the oracle procedure!  This is due to the fact that
both BIC and AIC graphs are estimated using all the $n=800$ data
points, while the oracle graph is estimated using only the subsampled dataset with size $b(n) = \lfloor10\cdot\sqrt{n} \rfloor= 282$. Direct usage
of the full sample is an advantage of model selection methods that
take the general form of BIC and AIC.  In high dimensions, however, we
see that even with this advantage, StARS clearly outperforms BIC and
AIC. The estimated graphs for different methods in the setting $n=400,
p =100$ are provided in Figures \ref{fig:neighborhood} and
\ref{fig:hub}, from which we see that the StARS graph is almost as
good as the oracle, while the $K$-CV, BIC, and AIC graphs are overly
too dense.


\begin{figure}[ht!]
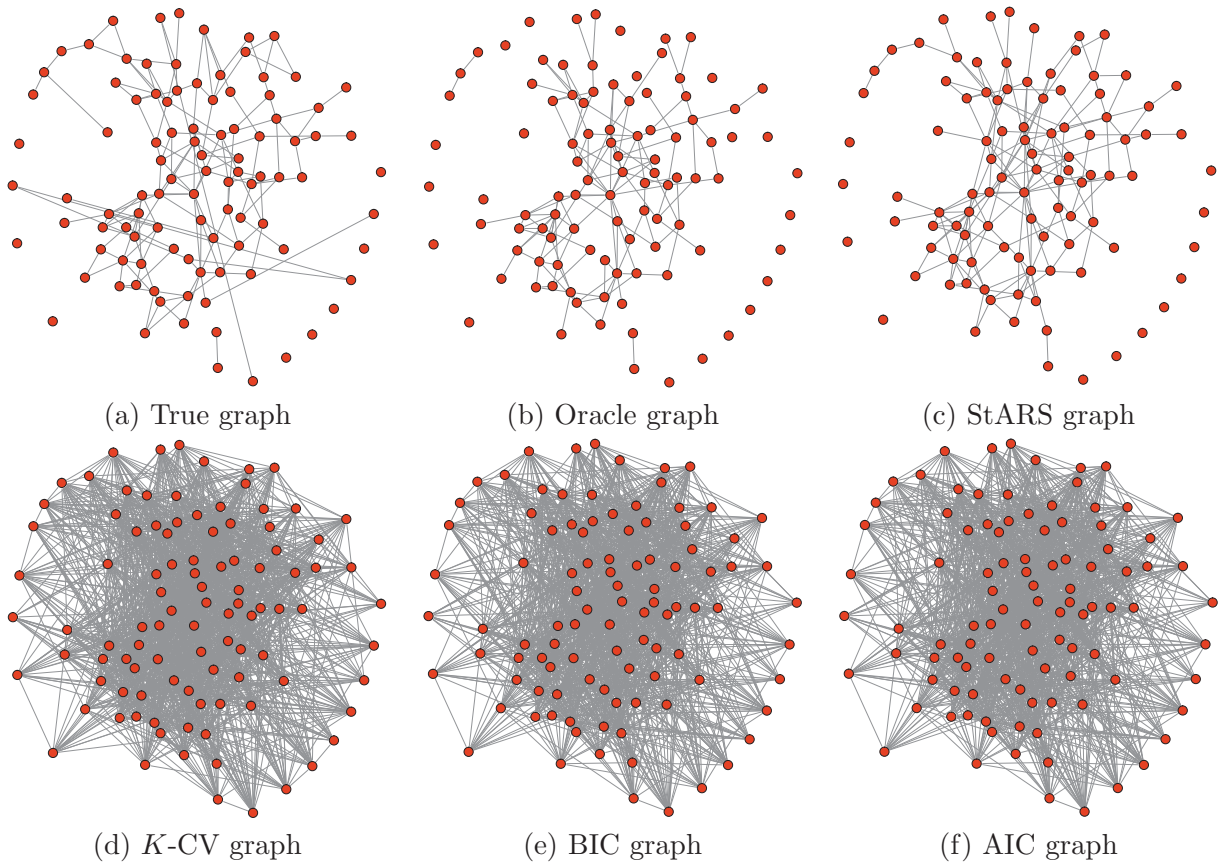

\centering
  \begin{small}
  \begin{tabular}{ccc}
  \includegraphics[scale=0.36]{./figs/neighborhood/True1} &
  \includegraphics[scale=0.36]{./figs/neighborhood/Oracle1} &
  \includegraphics[scale=0.36]{./figs/neighborhood/Stable1}\\
     (a) True graph & (b) Oracle graph & (c) StARS graph\\
  \includegraphics[scale=0.36]{./figs/neighborhood/KCV1}  &
  \includegraphics[scale=0.36]{./figs/neighborhood/BIC1} &
  \includegraphics[scale=0.36]{./figs/neighborhood/AIC1}   \\
  (d) $K$-CV  graph & (e) BIC graph & (f) AIC graph
   \end{tabular}
  \end{small}
   \caption{\small Comparison of different methods on the data from the neighborhood graphs ($n=400, p=100$).}
   \label{fig:neighborhood}
\end{figure}

\begin{figure}[ht!]
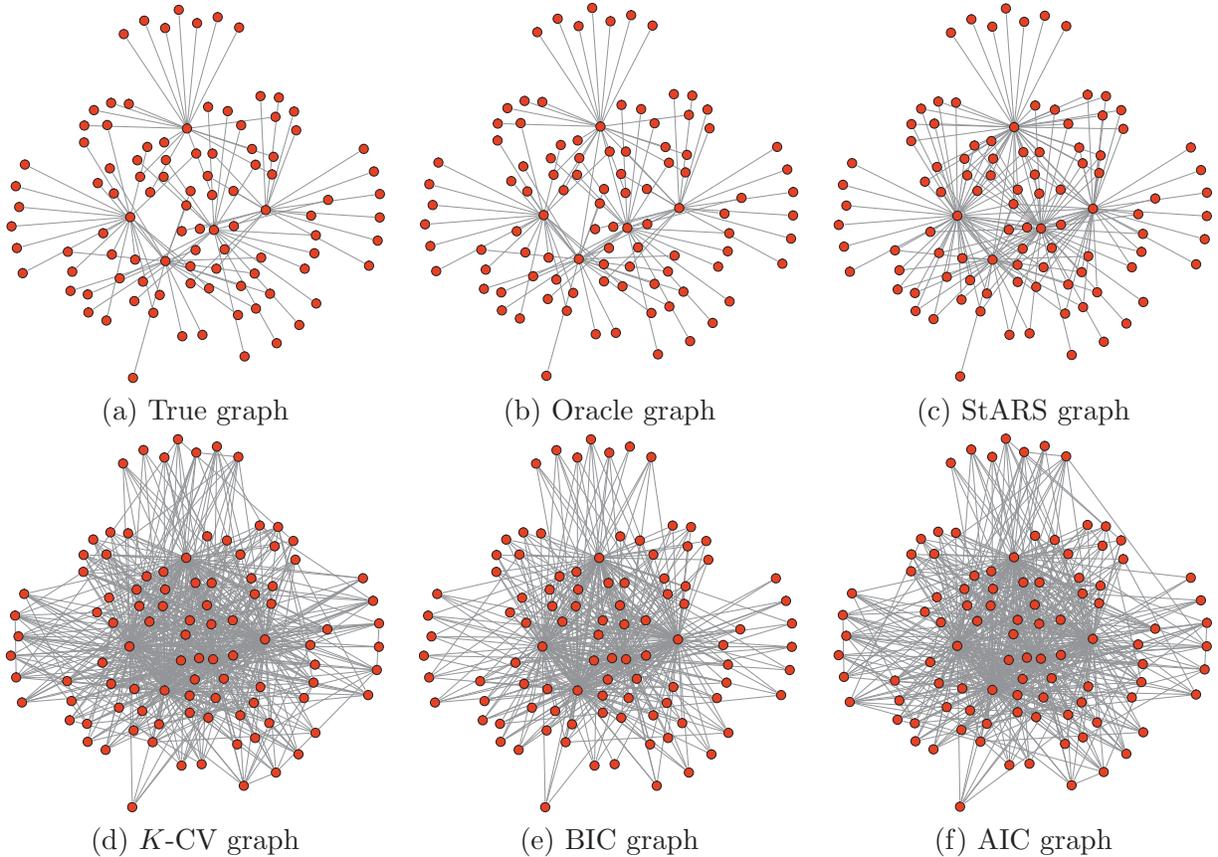

\centering
  \begin{small}
  \begin{tabular}{ccc}
  \includegraphics[scale=0.36]{./figs/hub/True1} &
  \includegraphics[scale=0.36]{./figs/hub/Oracle1} &
  \includegraphics[scale=0.36]{./figs/hub/Stable1}\\
     (a) True graph & (b) Oracle graph & (c) StARS graph\\
  \includegraphics[scale=0.36]{./figs/hub/KCV1}  &
  \includegraphics[scale=0.36]{./figs/hub/BIC1} &
  \includegraphics[scale=0.36]{./figs/hub/AIC1}   \\
  (d) $K$-CV  graph & (e) BIC graph & (f) AIC graph
   \end{tabular}
  \end{small}
   \caption{\small Comparison of different methods on the data from the hub graphs ($n=400, p=100$).}
   \label{fig:hub}
\end{figure}

%
%
%

\subsection{Microarray Data}

We apply StARS to a dataset based on Affymetrix GeneChip microarrays
for the gene expression levels from immortalized B cells of human
subjects.  The sample size is $n=294$.  The expression levels for each
array are pre-processed by log-transformation and standardization as
in \citep{Nayak09}.  Using a previously estimated sub-pathway subset containing 324 genes \citep{Liu:10a}, we study
the estimated graphs obtained from each method under
investigation. The StARS and BIC graphs are
provided in Figure \ref{fig:microarray}. We see that the StARS graph
is remarkably simple and informative,
exhibiting some cliques and hub genes.  In contrast, the BIC graph is
very dense and possible useful association information is buried in the large
number of estimated edges. The selected graphs using AIC and $K$-CV are even more dense than the BIC graph and is omitted here.
 A full treatment of the biological implication of these two graphs validated by enrichment analysis will be left as a future study.

\begin{figure}[ht!]
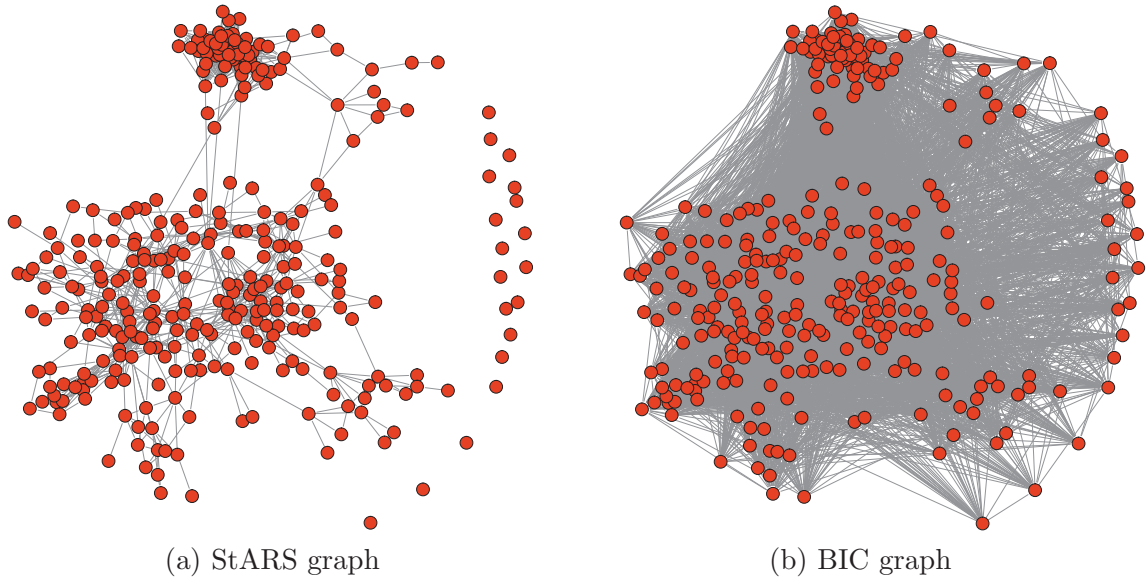

\centering
  \begin{small}
  \begin{tabular}{cc}
  \includegraphics[scale=0.5]{./figs/gene/Stable1} &
\hspace{0.5cm}  \includegraphics[scale=0.5]{./figs/gene/BIC1} \\
     (a) StARS graph & (b) BIC graph\\
   \end{tabular}
  \end{small}
   \caption{\small Microarray data example. The StARS graph is more informative graph than the BIC graph.}
   \label{fig:microarray}
\end{figure}

\section{Conclusions}

The problem of estimating structure in high dimensions is
very challenging. Casting the problem in the
context of a regularized optimization has led to some success, but
the choice of the regularization parameter is critical.  We present a
new method, StARS, for choosing this parameter in high dimensional inference
for undirected graphs. Like Meinshausen and B\"uhlmann's
 stability selection approach \citep{stability:10}, our method makes
use of subsampling, but it differs substantially from their approach
in both implementation and goals. For graphical models, we choose the regularization
parameter directly based on the edge stability. Under mild conditions, StARS is partially
sparsistent.  However, even without these conditions, StARS has a
simple interpretation: we use  the least
amount of regularization that simultaneously makes a graph sparse and
replicable under random sampling.


Empirically, we show that StARS works significantly better than existing
techniques on both synthetic and microarray datasets.  Although we
focus here on graphical models, our new method is generally applicable
to many problems that involve estimating structure, including regression, classification, density estimation, clustering, and dimensionality reduction.

\bibliography{local}

\end{document}